\newtheorem{theorem}{Theorem}
\begin{document}
%
\title{Explaining Models by Propagating Shapley Values}
\author{Hugh Chen \\
    Paul G. Allen School of CSE \\
    University of Washington \\
    Seattle, WA, USA \\
    \And
    Scott Lundberg \\
    Microsoft Research \\
    Redmond, WA, USA \\
    \And
    Su-In Lee \\
    Paul G. Allen School of CSE \\
    University of Washington \\
    Seattle, WA, USA \\
}
\maketitle
\begin{abstract}
\begin{quote}
In healthcare, making the best possible predictions with complex models (e.g., neural networks, ensembles/stacks of different models) can impact patient welfare.  In order to make these complex models explainable, we present DeepSHAP for mixed model types, a framework for layer wise propagation of Shapley values that builds upon DeepLIFT (an existing approach for explaining neural networks).  We show that in addition to being able to explain neural networks, this new framework naturally enables attributions for stacks of mixed models (e.g., neural network feature extractor into a tree model) as well as attributions of the loss.  Finally, we theoretically justify a method for obtaining attributions with respect to a background distribution (under a Shapley value framework).
\end{quote}
\end{abstract}

\section{Introduction}

Neural networks and ensembles of models are currently used across many domains.  For these complex models, explanations accounting for how features relate to predictions is often desirable and at times mandatory \cite{goodman2017european}.  In medicine, explainable AI (XAI) is important for scientific discovery, transparency, and much more \cite{holzinger2017we}.  One popular class of XAI methods is per-sample feature attributions (i.e., values for each feature for a given prediction).  

In this paper, we focus on SHAP values \cite{lundberg2017unified} -- Shapley values \cite{shapley1953value} with a conditional expectation of the model prediction as the set function.  Shapley values are the only additive feature attribution method that satisfies the desirable properties of local accuracy, missingness, and consistency.  In order to approximate SHAP values for neural networks, we fix a problem in the original formulation of DeepSHAP \cite{lundberg2017unified} where previously it used $E[x]$ as the reference and theoretically justify a new method to create explanations relative to background distributions.  Furthermore, we extend it to explain stacks of mixed model types as well as loss functions rather than margin outputs. 



Popular model agnostic explanation methods that also aim to obtain SHAP values are KernelSHAP \cite{lundberg2017unified} and IME \cite{vstrumbelj2014explaining}.  The downside of most model agnostic methods are that they are sampling based and consequently high variance or slow.

Alternatively, local feature attributions targeted to deep networks has been addressed in numerous works: Occlusion \cite{zeiler2014visualizing}, Saliency Maps \cite{simonyan2013deep}, Layer-Wise Relevance Propagation \cite{bach2015pixel}, DeepLIFT, Integrated Gradients (IG) \cite{sundararajan2017axiomatic}, and Generalized Integrated Gradients (GIG) \cite{gig}.  

Of these methods, the ones that have connections to the Shapley Values are IG and GIG.  IG integrates gradients along a path between a baseline and the sample being explained.  This explanation approaches the Aumann-Shapley value.  GIG is a generalization of IG to explain losses and mixed model types -- a feature DeepSHAP also aims to provide.  IG and GIG have two downsides: 1.) integrating along a path can be expensive or imprecise and 2.) the Aumann-Shapley values fundamentally differ to the SHAP values we aim to approximate.  Finally, DASP \cite{ancona2019explaining} is an approach that approximates SHAP values for deep networks.  This approach works by replacing point activations at all layers by probability distributions and requires many more model evaluations than DeepSHAP.  Because DASP aims to obtain the same SHAP values as in DeepSHAP it is possible to use DASP as a part of the DeepSHAP framework.


\section{Approach}

\subsection{Propagating SHAP values}

\begin{figure}[!ht]
\centering
\includegraphics[width=.8\linewidth]{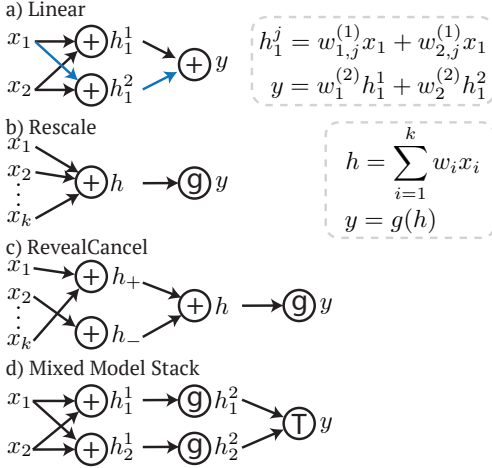}
\vspace{-.2cm}
\caption{\textit{Visualization of models for understanding DeepLIFT's connection to SHAP values}.  In the figure $g$ is a non-linear function and $T$ is a non-differentiable tree model.
}
\vspace{-.5cm}
\label{fig:understanding}
\end{figure}

DeepSHAP builds upon DeepLIFT; in this section we aim to better understand how DeepLIFT's rules connect to SHAP values.  This has been briefly touched upon in \cite{deeplift} and \cite{lundberg2017unified}, but here we explicitly define the relationship.

DeepSHAP is a method that explains a sample (foreground sample), by setting features to be ``missing''.  Missing features are set to corresponding values in a baseline sample (background sample).  Note that DeepSHAP generally uses a background distribution, however focusing on a single background sample is sufficient because we can rewrite the SHAP values as an average over attributions with respect to a single background sample at a time (see next section for more details).
In this section, we define a foreground sample to have features x$f_{x_i}$ and neuron values $f_{h}$ (obtained by a forward pass) and a background sample to have  $b_{x_i}$ or $b_{h}$.  Finally we define $\phi(\cdot)$ to be attribution values.  

If our model is \textbf{fully linear} as in Figure \ref{fig:understanding}a, we can get the exact SHAP values for an input $x_i$ by summing the attributions along all possible paths between that input $x_i$ and the model's output $y$.  Therefore, we can focus on a particular path (in blue).  Furthermore, the path's contribution to $\phi(x_i)$ is exactly the product of the weights along the path and the difference in $x_1$: $w^{(2)}_{2} w^{(1)}_{1,2} (f_{x_1}-b_{x_1})$, because we can rewrite the layers of linear equations in \ref{fig:understanding}a as a single linear equation.  Note that we can derive the attribution for $x_1$ in terms of the attribution of intermediary nodes (as in the chain rule):
\begin{align}
\phi(h_1^2)&=w_2^{(2)}(f_{h_1^2}{-}b_{h_1^2})\nonumber\\
\phi(x_1)&=\frac{\phi(h_1^2)}{f_{h_1^2}{-}b_{h_1^2}}w_{1,2}^{(1)}(f_{x_1}{-}b_{x_1})\label{eq:phi_lin_1}
\end{align}

Next, we move on to reinterpreting the two variants of DeepLIFT: the Rescale rule and the RevealCancel rule.  First, a gradient based interpretation of the \textbf{Rescale rule} has been discussed in \cite{ancona2018towards}.  Here, we explicitly tie this interpretation to the SHAP values we hope to obtain.  

For clarity, we consider the example in Figure \ref{fig:understanding}b.  First, the attribution value for $\phi(h)$ is $g(f_h)-g(b_h)$ because SHAP values maintain local accuracy (sum of attributions equals $f_y-b_y$) and $g$ is a function with a single input.   Then, under the Rescale rule, $\phi(x_i){=}\frac{\phi(h)}{f_h-b_h}w_i(f_{x_i}-b_{x_i})$ (note the resemblance to Equation (\ref{eq:phi_lin_1})). Under this formulation it is easy to see that the Rescale rule first computes the exact SHAP value for $h$ and then propagates it back linearly.  In other words, the the non-linear and linear functions are treated as separate functions.  Passing back nonlinear attributions linearly is clearly an approximation, but confers two benefits: 1.) fast computation on order of a backward pass and 2.) a guarantee of local accuracy.  


Next, we describe how the \textbf{RevealCancel rule} (originally formulated to bring DeepLIFT closer to SHAP values) connects to SHAP values in the context of Figure \ref{fig:understanding}c.  RevealCancel partitions $x_i$ into positive and negative components based on if $w_i(f_{x_i}-b_{x_i})<t$ (where $t{=}0$), in essence forming nodes $h_+$ and $h_-$.  This rule computes the exact SHAP attributions for $h_+$ and $h_-$ and then propagates the resultant SHAP values linearly.  Specifically: 
\begin{align*}
\phi(g_+)&=\frac{1}{2}((g(f_{h_+}{+}f_{h_-})-g(b_{h_+}{+}f_{h_-})+\nonumber\\&(g(f_{h_+}{+}b_{h_-})-g(b_{h_+}{+}b_{h_-}))\\    
\phi(g_+)&=\frac{1}{2}((g(f_{h_+}{+}f_{h_-})-g(f_{h_+}{+}b_{h_-})+\nonumber\\&(g(b_{h_+}{+}f_{h_-})-g(b_{h_+}{+}b_{h_-}))\\
\phi(x_i)&=
\begin{cases}
  \frac{\phi_{h_+}}{f_{h_+}-b_{h_+}}w_i(f_{x_i}-b_{x_i}),   & \text{if}\ w_i(f_{x_i}-b_{x_i})>t \\
  \frac{\phi_{h_-}}{f_{h_-}-b_{h_-}}w_i(f_{x_i}-b_{x_i}),   & \text{otherwise}\\
\end{cases}
\end{align*}

Under this formulation, we can see that in contrast to the Rescale rule that explains a linearity and nonlinearity by exactly explaining the nonlinearity and backpropagating, the RevealCancel rule exactly explains the nonlinearity and a partition of the inputs to the linearity as a single function prior to backpropagating.  The RevealCancel rule incurs a higher computational cost in order to get a an estimate of $\phi(x_i)$ that is ideally closer to the SHAP values.


This reframing naturally motivates explanations for \textbf{stacks of mixed model types}.  In particular, for Figure \ref{fig:understanding}d, we can take advantage of fast, exact methods for obtaining SHAP values for tree models to obtain $\phi(h_j^2)$ using Independent Tree SHAP \cite{treeshap}.  Then, we can propagate these attributions to get $\phi(x_i)$ using either the Rescale or RevealCancel rule.  This argument extends to explaining losses rather than output margins as well.

Although we consider specific examples here, the linear propagation described above will generalize to arbitrary networks if SHAP values can be computed or approximated for individual components.

\subsection{SHAP values with a background distribution}
\label{sec:theory_multref}

Note that many methods (Integrated Gradients, Occlusion) recommend the utilization of a single background/reference sample.  In fact, DeepSHAP as previously described in \cite{lundberg2017unified} created attributions with respect to a single reference equal to the expected value of the inputs.  However, in order to obtain SHAP values for a given background distribution, we prove that the correct approach is as follows: obtain SHAP values for each baseline in your background distribution and average over the resultant attributions.  Although similar methodologies have been used heuristically \cite{deeplift,erion2019learning}, we provide a theoretical justification in Theorem 1 in the context of SHAP values.

\begin{theorem}
\label{thm}
The average over single reference SHAP values approaches the true SHAP values for a given distribution.
\end{theorem}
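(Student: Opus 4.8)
The plan is to exploit the fact that the Shapley value is a \emph{linear} operator on the space of set functions (cooperative games), so that an expectation sitting inside the set function can be pulled out to the level of the attributions. Fix a model $f$ and a foreground sample with feature values $f_{x_j}$, and let $N$ be the set of all features. For a single background sample with values $b_{x_j}$, write $f(x_S; b_{\bar S})$ for the model evaluated at the hybrid point whose $j$-th coordinate equals $f_{x_j}$ when $j \in S$ and $b_{x_j}$ otherwise. The single-reference SHAP value of feature $i$ is the Shapley value $\phi_i(v_b)$ of the game $v_b(S) = f(x_S; b_{\bar S})$, and the ``true'' SHAP value for a background distribution $\mathcal{D}$ is $\phi_i(v)$ of the game $v(S) = E_{b \sim \mathcal{D}}[f(x_S; b_{\bar S})]$; crucially $v = E_{b \sim \mathcal{D}}[v_b]$ by construction.

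First I would record linearity of $\phi_i$: from the explicit weighted-sum formula $\phi_i(u) = \sum_{S \subseteq N \setminus \{i\}} \frac{|S|!\,(|N|-|S|-1)!}{|N|!}\big(u(S \cup \{i\}) - u(S)\big)$, the value $\phi_i(u)$ is a finite linear combination of evaluations of $u$, so $\phi_i(\alpha u + \beta w) = \alpha\,\phi_i(u) + \beta\,\phi_i(w)$. Since expectation commutes with finite linear combinations --- and all the expectations are finite provided $f$ is integrable under $\mathcal{D}$ --- applying $\phi_i$ to $v = E_b[v_b]$ yields $\phi_i(v) = E_{b \sim \mathcal{D}}[\phi_i(v_b)]$. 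In words, the SHAP value for the distribution equals the average of the single-reference SHAP values.

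To justify the word ``approaches'', I would invoke the strong law of large numbers. Drawing $b^{(1)}, \dots, b^{(m)}$ i.i.d.\ from $\mathcal{D}$, the variables $\phi_i(v_{b^{(k)}})$ are i.i.d.\ and integrable (each being a fixed linear combination of the integrable maps $b \mapsto f(x_S; b_{\bar S})$), so $\tfrac{1}{m}\sum_{k=1}^{m}\phi_i(v_{b^{(k)}}) \to E_{b \sim \mathcal{D}}[\phi_i(v_b)] = \phi_i(v)$ almost surely as $m \to \infty$; when $f$ additionally has finite variance under $\mathcal{D}$ one may attach the usual $O(1/\sqrt{m})$ standard-error bound.

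The main obstacle is not the calculation but ensuring the identity $v = E_b[v_b]$ is the definition actually in force: this is the \emph{interventional} (feature-independent) formulation of SHAP, in which ``missing'' features are filled in by independent draws from $\mathcal{D}$, as opposed to the observational conditional expectation $E[f(X) \mid X_S = x_S]$ --- for the latter, averaging single-reference SHAP values does \emph{not} in general recover the Shapley value. Hence the proof should open by fixing this modeling choice; after that everything reduces to linearity of $\phi_i$ together with the law of large numbers, with only routine measurability and integrability checks left to verify.
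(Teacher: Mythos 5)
Your proposal is correct and follows essentially the same route as the paper: both arguments hinge on the interventional set function being an expectation over single-reference games and on exchanging that expectation with the Shapley weighted sum (the linearity of $\phi_i$ you invoke is exactly the paper's swap of the order of summation over the empirical background set $D$ and the subsets $S$). Your explicit caveat that the identity fails for the observational conditional expectation, and the law-of-large-numbers justification of the word ``approaches,'' are welcome sharpenings of points the paper only gestures at in its closing remark.
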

\begin{proof}
Define $D$ to be the data distribution, $N$ to be the set of all features, and $f$ to be the model being explained.  Additionally, define  $\mathcal{X}(x,x',S)$ to return a sample where the features in $S$ are taken from $x$ and the remaining features from $x'$.  Define $C$ to be all combinations of the set $N \setminus \{i\}$ and $P$ to be all permutations of $N \setminus \{i\}$.  Starting with the definition of SHAP values for a single feature: $\phi_i(x)$
\begin{align*}
&= \sum_{S\in C } W(|S|,|N|)(\mathbb{E}_{D}[f(X)|x_{S\cup \{i\}}] {-} \mathbb{E}_{D}[f(X)|x_{S}])\\
&=\frac{1}{|P|}\sum_{S\subseteq P} \mathbb{E}_\mathcal{D}[f(x)|\text{do}(x_{S \cup \{i\}})] {-} \mathbb{E}_\mathcal{D}[\text{do}(f(x)|x_{S})]\\
&= \frac{1}{|P|}\sum_{S\subseteq P}\frac{1}{|D|}\sum_{x'\in D} f(\mathcal{X}(x,x',S\cup \{i\})) {-} f(\mathcal{X}(x,x',S))
\\
&= \frac{1}{|D|}\sum_{x'\in D} \underbrace{\frac{1}{|P|}\sum_{S\subseteq P} f(\mathcal{X}(x,x',S\cup \{i\})) {-} f(\mathcal{X}(x,x',S))}_\text{single reference SHAP value}
\end{align*}
where the second step depends on an interventional conditional expectation \cite{janzing2019feature} which is very close to Random Baseline Shapley in \cite{sundararajan2019many}).
\end{proof}

\section{Experiments}
\label{sec:experiments}



\subsection{Background distributions avoid bias}
\label{sec:experiments:results:multiple_references}
\begin{figure}[!ht]
\centering
\vspace{-.2cm}
\includegraphics[width=.8\linewidth]{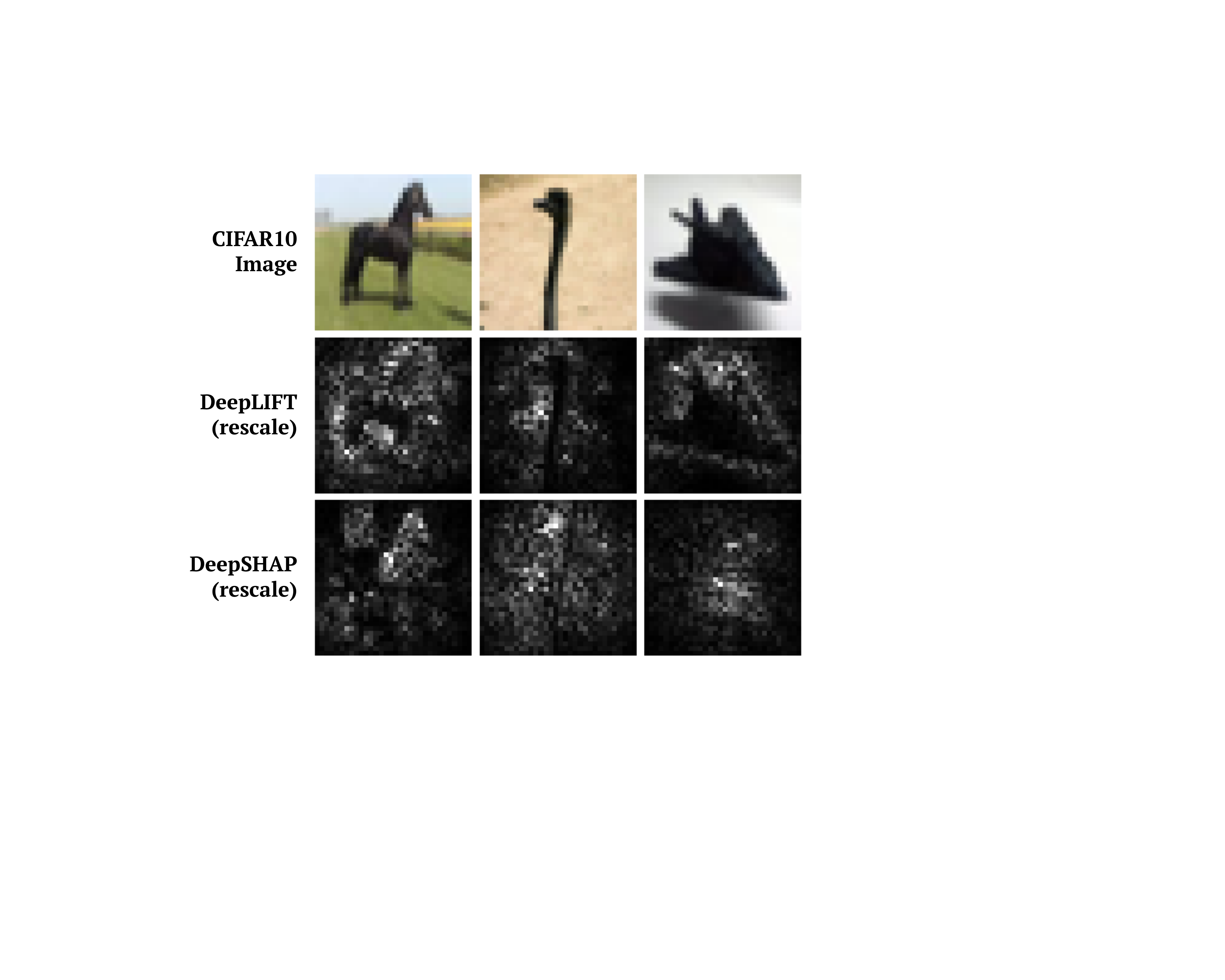}
\vspace{-.2cm}
\caption{\textit{Using a single baseline leads to bias in explanations.}
}
\label{fig:reference}
\end{figure}

\noindent
In this section, we utilize the popular CIFAR10 dataset \cite{krizhevsky2009learning} to demonstrate that single references lead to bias in explanations.  We train a CNN that achieves 75.56\% test accuracy and evaluate it using either a zero baseline as in DeepLIFT or with a random set of 1000 baselines as in DeepSHAP.  

In Figure \ref{fig:reference}, we can see that for these images drawn from the CIFAR10 training set, DeepLIFT has a clear bias that results in low attributions for darker regions of the image.  For DeepSHAP, having multiple references drawn from a background distribution solves this problem and we see attributions in sensical dark regions in the image.

\subsection{Explaining mortality prediction}

\noindent
In this section, we validate DeepSHAP's explanations for an MLP with 82.56\% test accuracy predicting 15 year mortality.  The dataset has 79 features for 14,407 individuals released by \cite{treeshap} based on NHANES I Epidemiologic Followup Study \cite{cox1997plan}.  

\begin{figure}[!ht]
\centering
\vspace{-.3cm}
\includegraphics[width=\linewidth]{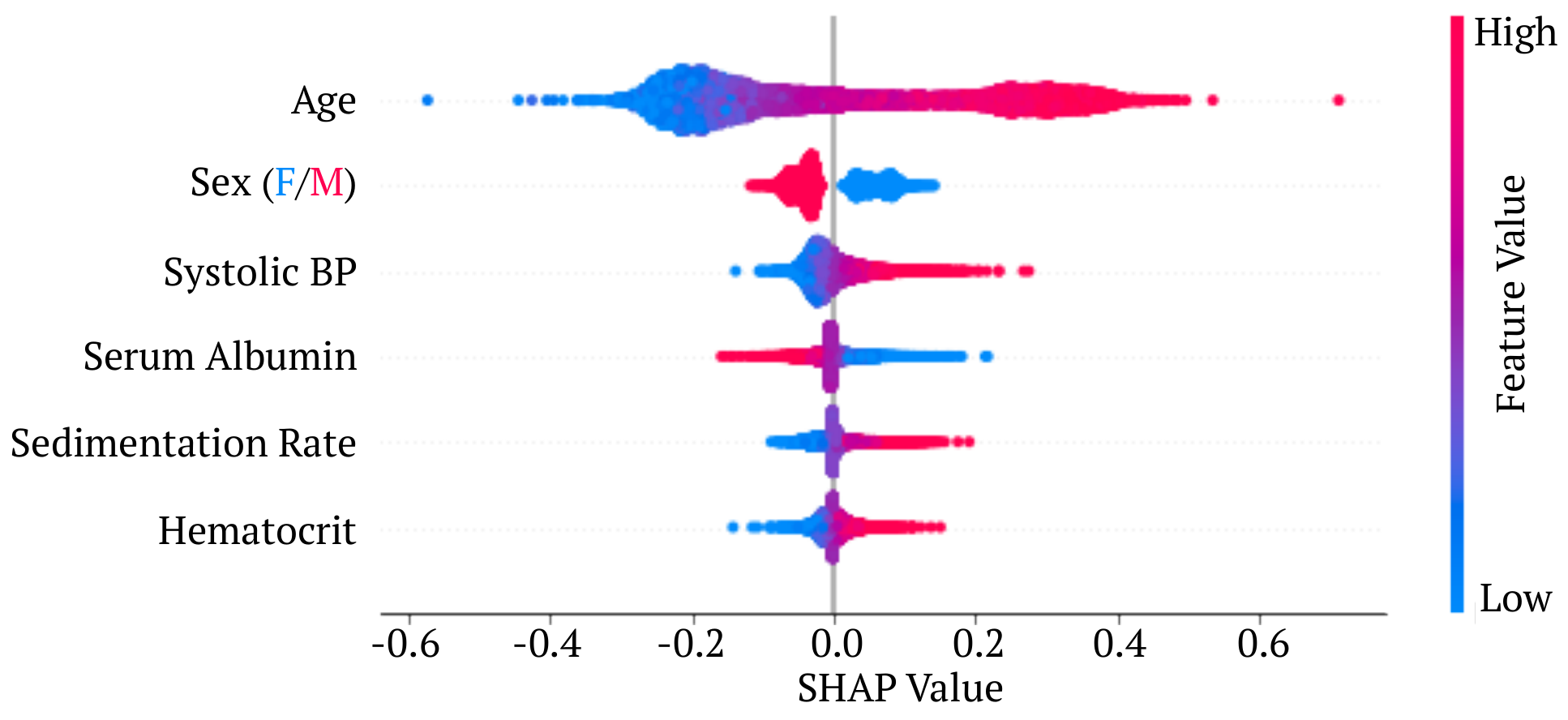}
\vspace{-.5cm}
\caption{\textit{Summary plot of DeepSHAP attribution values.} Each point is the local feature attribution value, colored by feature value.  For brevity, we only show the top 6 features.}
\label{fig:mortality_nhanes}
\end{figure}

In Figure \ref{fig:mortality_nhanes}, we plot a summary of DeepSHAP (with 1000 random background samples) attributions for all NHANES training samples (n=$8023$) and notice a few trends.  First, Age is predictably the most important and old age contributes to a positive mortality prediction (positive SHAP values).  Second, the Sex feature validates a well-known difference in mortality \cite{gjoncca1999male}.  Finally, the trends linking high systolic BP, low serum albumin, high sedimentation rate, and high hematocrit to mortality have been independently discovered  \cite{port2000systolic,goldwasser1997serumalbumin,paul2012hematocrit,go2016sedimentation}.

\begin{figure}[!ht]
\centering
\includegraphics[width=\linewidth]{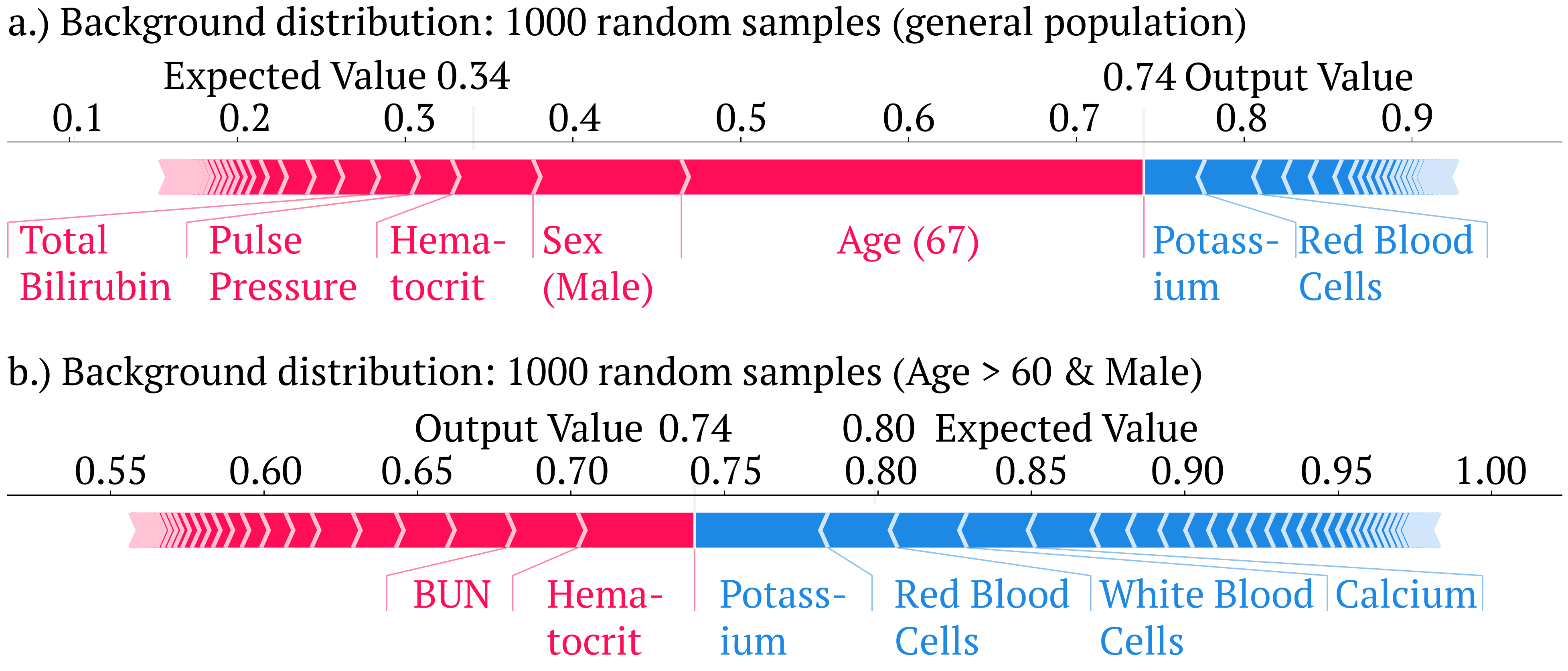}
\vspace{-.5cm}
\caption{\textit{Explaining an individual's mortality prediction for different backgrounds distributions.}}
\label{fig:ind_diff_backgrounds}
\end{figure}

Next, we show the benefits of being able to specify a background distribution.  In Figure  \ref{fig:ind_diff_backgrounds}a, we see that explaining an individual's mortality prediction with respect to a general population emphasizes that the individual's age and gender are driving a high mortality prediction.  However, in practice doctors are unlikely to compare a 67-year old male to a general population that includes much younger individuals.  In Figure \ref{fig:ind_diff_backgrounds}b, being able to specify a background distribution allows us to compare our individual against a more relevant distribution of males over 60.  In this case, gender and age are naturally no longer important, and the individual actually may not have cause for concern.

\subsection{Interpreting a stack of mixed model types}
\label{sec:experiments:results:mixed_model_types}

\begin{figure}[!ht]
\centering
\vspace{-.25cm}
\includegraphics[width=\linewidth]{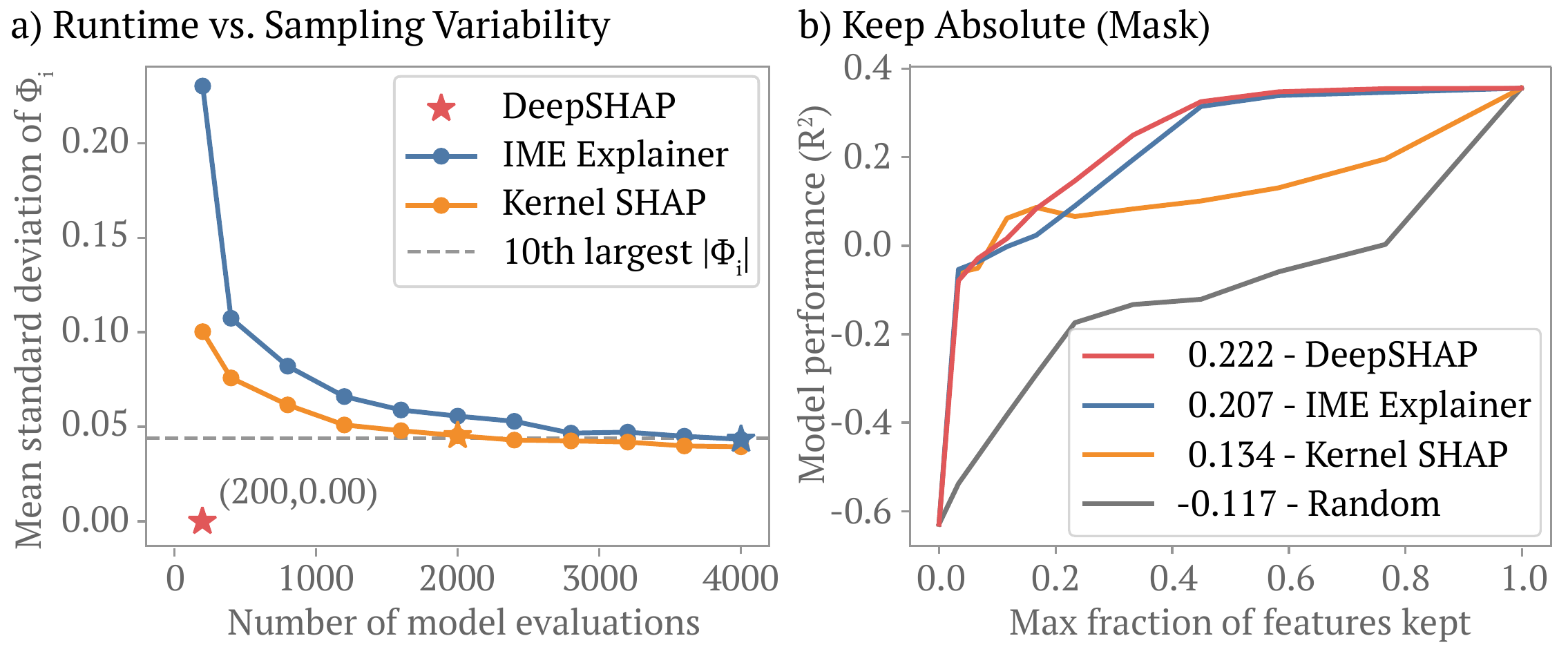}
\caption{\textit{Ablation test for explaining an LSTM feature extractor fed into an XGB model.}  All methods used background of 20 samples obtained via kmeans.  [a.] Convergence of methods for a single explanation.  [b.] Model performance versus \# features kept for DeepSHAP (rescale), IME Explainer (4000 samples), KernelSHAP (2000 samples) and a baseline (Random) (AUC in the legend).
}
\vspace{-.2cm}
\label{fig:ablation}
\end{figure}

\noindent
Stacks, and more generally ensembles, of models are increasingly popular for performant predictions \cite{bao2009stacking,gunecs2017stacked,zhai2018development}.  In this section, our aim is to evaluate the efficacy of DeepSHAP for a neural network feature extractor fed into a tree model.  For this experiment, we use the Rescale rule for simplicity and Independent TreeSHAP to explain the tree model \cite{treeshap}.  The dataset is a simulated one called Corrgroups60.  Features $X\in\mathbb{R}^{1000\times60}$ have tight correlation between groups of features ($x_i$ is feature $i$), where $\rho_{x_i,x_i}{=}1$, $\rho_{x_i,x_{i+1}}{=}\rho_{x_i,x_{i+2}}{=}\rho_{x_{i+1},x_{i+2}}{=}.99$ if $(i \bmod 3) {=} 0$, and $\rho_{x_i,x_j}{=}0$ otherwise.  The label $y\in \mathbb{R}^{n}$ is generated linearly as $y{=}X\beta {+} \epsilon$ where $\epsilon {\sim} \mathcal{N}_{n}(\mu{=}0,\sigma^2{=}10^{-4})$ and $\beta_i{=}1$ if $(i \bmod 3) {=} 0$ and $\beta_i{=}0$ otherwise.


We evaluate DeepSHAP with an ablation metric called \textit{keep absolute (mask)} \cite{treeshap}.  The metric works in the following manner:  1) Obtain the feature attributions for all test samples  2) Mask all features (by mean imputation) 3) Introduce one feature at a time (unmask) from largest absolute attribution value to smallest for each sample and measure $R^2$.  The $R^2$ should initially increase rapidly, because we introduce the ``most important'' features first.   


We compare against two sampling-based methods (a natural alternative for explaining mixed model stacks) that provide SHAP values in expectation: KernelSHAP and IME explainer.  In Figure \ref{fig:ablation}b, DeepSHAP (rescale) has no variability and requires a fixed number of model evaluations.  IME Explainer and KernelSHAP, benefit from having more samples (and therefore more model evaluations).  For the final comparison, we check the variability of the tenth largest attribution (absolute value) of the sampling based methods to determine ``convergence'' across different numbers of samples.  Then, we use the number of samples at the point of ``convergence'' for the next figure.

In Figure \ref{fig:ablation}c, we can see that DeepSHAP has a slightly higher performance than model agnostic methods.  Promisingly, all methods demonstrate initial steepness in their performance; this indicates that the most important features had higher attribution values.  We hypothesize that KernelSHAP and IME Explainer's lower performance is due in part to noise in their estimates.  This highlights an important point: model agnostic methods often have sampling variability that makes determining convergence difficult.  For a fixed background distribution, DeepSHAP does not suffer from this variability and generally requires fewer model evaluations.

\subsection{Improving the RevealCancel rule}
\label{sec:experiments:results:rescale_revealcancel}

\begin{figure}[!ht]
\centering
\vspace{-.3cm}
\includegraphics[width=\linewidth]{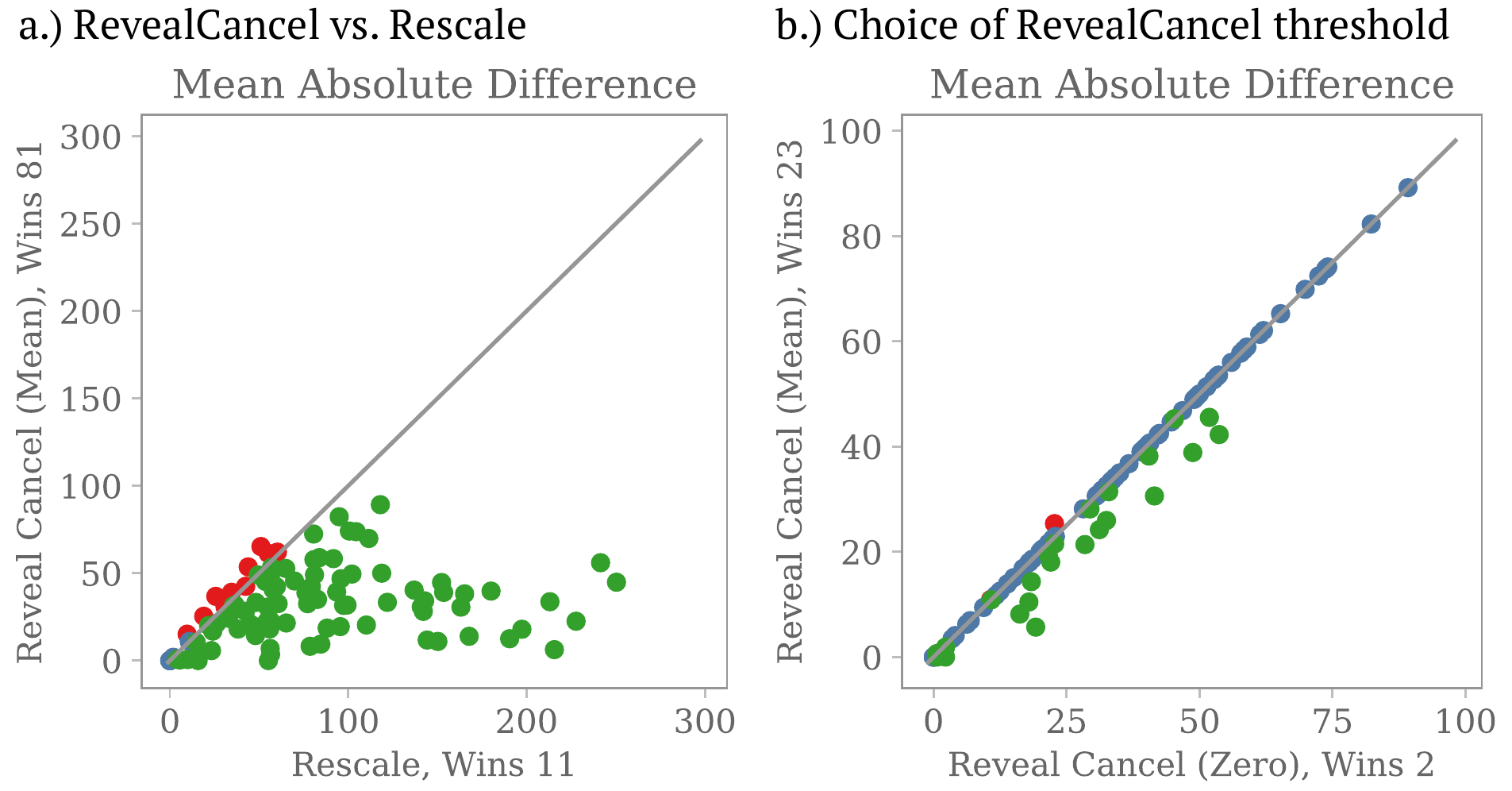}
\caption{\textit{Comparison of new RevealCancel$^\text{Mean}$ rule for estimating SHAP values on a toy example.}  The axes correspond to mean absolute difference from the SHAP values (computed exactly).  Green means RevealCancel$^\text{Mean}$ wins and red means it loses.}
\label{fig:revealcancel}
\end{figure}

\noindent
DeepLIFT's RevealCancel rule's connection to the SHAP values is touched upon in \cite{deeplift}.  Our SHAP value framework explicitly defines this connection.  In this section, we propose a simple improvement to the RevealCancel rule.  In DeepLIFT's RevealCancel rule the threshold $t$ is set to $0$ (for splitting $h_-$ and $h_+$).  Our proposed rule RevealCancel$^\text{Mean}$ sets the threshold to the mean value of $w_i(f_{x_i}{-}b_{x_i})$ across $i$.  Intuitively, splitting by the mean better separates $x_i$ nodes, resulting in a better approximation than splitting by zero.

We experimentally validate RevealCancel$^\text{Mean}$ in Figure \ref{fig:revealcancel}, explaining a simple function: $\text{ReLU}(x_1+x_2+x_3+x_4+100)$.  We fix the background to zero: $b_{x_i}{=}0$ and draw 100 foreground samples from a discrete uniform distribution: $f_{x_i}{\sim} U\{-1000,1000\}$.  

In Figure \ref{fig:revealcancel}a, we show that RevealCancel$^\text{Mean}$ offers a large improvement for approximating SHAP values over the Rescale rule and a modest one over the original RevealCancel rule (at no additional asymptotic computational cost).  

\section{Conclusion}

In this paper, we improve the original DeepSHAP formulation \cite{lundberg2017unified} in several ways: we 1.) provide a new theoretically justified way to provide attributions with a background distribution 2.) extend DeepSHAP to explain stacks of mixed model types 3.) present improvements of the RevealCancel rule.

Future work includes more quantitative validation on different data sets and comparison to more interpretability methods.  In addition, we primarily used Rescale rule for many of these evaluations, but more empirical evaluations of RevealCancel are also important.


\bibliography{aaai}
\bibliographystyle{aaai}

\clearpage

\end{document}